\def\ttheta{\boldsymbol \theta}
\def\ttheta{ {\boldsymbol{\theta}} }
\def\LL{{\mathcal{L}}}
\def\real{ {\rm I\!R} }
\def\natural{ {\rm I\!N} }
\def\OO{ {\mathcal{O}} }
\def\C{ {\mathcal{C}} }
\DeclareMathOperator*{\argmin}{argmin~}
\newtheorem{theorem}{Theorem}
\newtheorem{proof}[theorem]{Proof}
\title{Deep Demosaicing for Edge Implementation}
\author[1]{Ramchalam Ramakrishnan}
\author[2]{Shangling  Jui}
\author[1]{Vahid Patrovi Nia\thanks{Corresponding author: vahid.partovinia@huawei.com}}
\affil[1]{Huawei Noah's Ark Lab}
\affil[2]{Huawei Kirin Solution  }
\date{}
\begin{document}

\maketitle              
\begin{abstract}



Most digital cameras use sensors coated with a Color Filter Array (CFA) to capture channel components at every pixel location, resulting in a mosaic image that does not contain pixel values in all channels. Current research on reconstructing these missing channels, also known as \emph{demosaicing}, introduces many artifacts, such as \emph{zipper effect} and \emph{false color}. Many deep learning demosaicing techniques  outperform other classical techniques in reducing the impact of artifacts. However, most of these models tend to be over-parametrized. Consequently, edge implementation  of the state-of-the-art deep learning-based demosaicing algorithms on low-end edge devices is a major challenge. We provide an exhaustive search of deep neural network architectures and obtain a Pareto front of Color Peak Signal to Noise Ratio (CPSNR) as the performance criterion versus the number of parameters as the model complexity that outperforms the state-of-the-art. Architectures on the Pareto front can then be used to choose the best architecture for a variety of resource constraints. Simple architecture search methods such as exhaustive search and grid search requires some conditions of the loss function to converge to the optimum. We clarify these conditions in a brief theoretical study.\\

\noindent\textbf{Copyright Notice}: This paper has been accepted in 16th International Conference of Image Analysis and Recognition (ICIAR 2019). Authors have assigned The Springer Verlage all rights under the Springer copyright. The revised version of the article is published in the Lecture Notes in Computer Science Springer series under the same title.

\end{abstract}
\section{Introduction}

%


Deep Learning is changing lives on a day to day basis. Overparametrization of deep models can have an impact in their deployment for variety of applications. For instance, model size could adversely affect the real-time updates in mobile phone implementation. Moreover, the training and inference time increases for a large model. For instance, ResNet152 takes 1.5 weeks to train on the Image-Net dataset using one Nvidia TitanX GPU \citep{Han_EfficientNN_2015}. In terms of hardware, much of the impact of such models is in the energy consumption. As an example, Dynamic Random Access Memory (DRAM) consumes most of the energy at inference. Therefore,  it is imperative to reduce the model size \citep{Dally_EIE_2016}. 


The success of deep learning models has been proven in many high-level vision tasks such as image classification and object detection. However, deep learning solutions are less explored for  low-level vision problems where the edge implementation is an obligation. Previous work shows the effectiveness on other low-level vision problems such as image denoising and image demosaicing \citep{Xie_Denoising_2012}. Here we explore various architectures for demosaicing, with the aim of finding suitable models for deployment on the edge. It is imperative to make use of models that can be stored on edge devices. 


\section{Demosaicing} \label{demosaicing}
Image demosaicing involves interpolating full-resolution images from incomplete color samples produced by an image sensor. Limitations in camera sensor resolution and sensitivity leads to the problem of mosaicing. 
For a digital camera to capture or produce a full color image, it uses sensors to detect all 3 colors (RGB) at every pixel location. One way this is done involves using a beam-splitter to project the image onto three separate sensors, see Figure \ref{fig:CFA}a, one for each color (RGB). Color filters are placed in front of each sensor to filter specific wavelengths 
such that three full-channel color images are obtained. This is a costly process that is generally used in scientific-grade microscopes. Most digital cameras, on the other hand, make use of just one sensor coated with a Color Filter Array (CFA). In this system, one color component is captured at every pixel location and the missing channels are reconstructed. The resultant image is often called a mosaic image, derived from the CFA's mosaic pattern. This mosaic image is then subjected to software-based interpolation, resulting in a full-resolution image; a process termed as \emph{demosaicing}.

\begin{figure}[!h]
  \centering
  \includegraphics[width=0.6\linewidth]{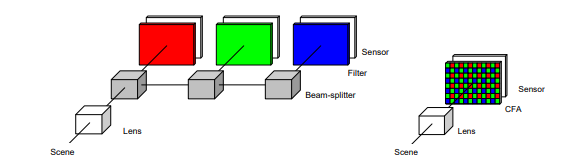}
  \includegraphics[width=0.6\linewidth]{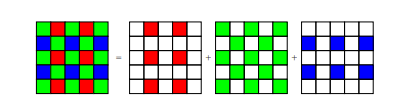}
  \caption{a. Optical path for multi-chip and single-chip digital camera, b. Bayer Pattern in single-chip digital cameras, borrowed from \cite{Li_ImageD_2007}.}
  \label{fig:CFA}
\end{figure}

Various existing CFA patterns are currently used, with the Bayer CFA being the most commonly used approach for mosaicing. The Bayer CFA measures the green image on \emph{quincinx} grid, and the red/blue on the rectangular grids (Figure \ref{fig:CFA}b). The green channel has higher sampling because the human retina has a higher sensitivity toward the green wavelength. 

Let $I^{CFA}:\mathbb{Z}^2 \rightarrow \mathbb{Z}^3$ denote an $M\times N$ Bayer CFA image. If we consider the sampling pattern shown in Figure \ref{fig:CFA}b, then the mosaic image is as follows:
\begin{equation}
   I(i,j) = 
  \begin{cases}
     R_{i,j} & \text{for i odd and j even,}\\
     B_{i,j} & \text{for i even and j odd,}\\
     G_{i,j} & \text{Otherwise,}
  \end{cases}
\end{equation}
\\
where $R_{i,j}$, $G_{i,j}$, $B_{i,j}$ includes values between 0 and 255. To estimate the missing two missing channels for each pixel location using demosaicing, 

\begin{equation}
   \hat{I}(i,j) = 
  \begin{cases}
     (R_{i,j},\hat{B}_{i,j}, \hat{G}_{i,j}) & \text{for i odd and j even,}\\
     (\hat{R}_{i,j},B_{i,j}, \hat{G}_{i,j}) & \text{for i even and j odd,}\\
     (\hat{R}_{i,j},\hat{B}_{i,j}, G_{i,j}) & \text{Otherwise,}
  \end{cases}
\end{equation}
where $\hat{R}_{i,j}$,$\hat{G}_{i,j}$,$\hat{B}_{i,j}$ are the estimates for the channels at each pixel location.

Demosaicing approaches are divided into various categories:
1. Edge-sensitive methods, 2. Directional interpolation and decision methods, 3. Frequency domain approaches, 4. Wavelet-based methods, 5. Statistical reconstruction techniques, and 6. Deep learning \citep{Menon_2011}. Early studies in bi-linear/bi-cubic \citep{Yu_bicubic_2006} and spline interpolation make use of single-channel interpolation techniques that treat each channel separately, without using the inter-channel information and correlation. For instance, bi-linear interpolation computes the red value of a non-red pixel as the average of the 2/4 adjacent red pixels; this method is then repeated with the blue and green pixels. Many comparatively advanced algorithms use the results of bi-linear interpolation as a baseline. These algorithms make use of a weighted average of the neighboring pixels to fill the missing channel values.

Recently, neural network approaches like convolutional neural networks (CNN) have been effectively used for joint denoising and demosaicing \citep{Gharbi_demosaic_2016}. This method down-samples the mosaiced image using multiple convolutions and computes the residual at lower resolution; this is further concatenated with the input mosaic image. The final output is then calculated using another group of convolutions at full resolution to reconstruct the full resolution image. Studies \citep{Tan_2018} have also used Deep Residual Learning for image demosaicing for Bayer CFA. This neural network uses a two-stage architecture: 1) the first step recovers an intermediate result of the Green and Red/Blue channel separately as a guiding prior to use as input for the second stage, 2) the final demosaiced image is reconstructed. 


The current state-of-the-art model, DMCNN-VD \citep{Syu_Demosaicing_2018} architecture (Figure \ref{fig:DMCNN}) consists of a deep network that uses same dimensions for  the input and output layers. They use the Bayer CFA mosaic layer and 20 blocks (CNN, BN, SELU), instead of the 3 layers in the DMCNN model. 


\begin{figure}[h]
  \centering
  \includegraphics[width=0.4\linewidth]{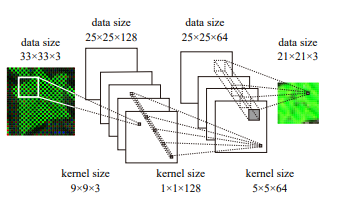}
  \includegraphics[width=0.4\linewidth]{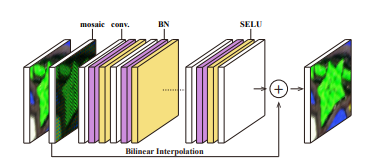}
  \caption{DMCNN  (left) and DMCNN-VD (right) architectures introduced in  \cite{Syu_Demosaicing_2018}.}
  \label{fig:DMCNN}
\end{figure}

\subsection{Artifacts}
Although various approaches to the demosaicing problem have been discussed, most techniques have a propensity to introduce unwanted artifacts into the resulting images. A previous study shows that bi-linear interpolation of mosaic images results in various structural errors \citep{Chang_Bi_2006}, such as the two homogeneous areas with 2 gray levels of  low and high intensity ($L$ and $H$, $L$$<$$H$) represented in Figure \ref{fig:artifact}.1. Another artifact introduced is the mismatch in the green and blue/red channel intensity.

\begin{figure}[!h]
  \centering
  \includegraphics[width=0.35\linewidth]{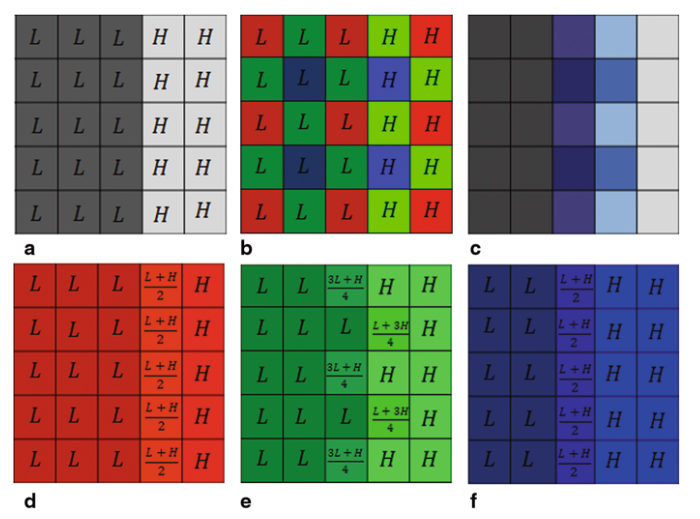}
  \includegraphics[width=0.22\linewidth]{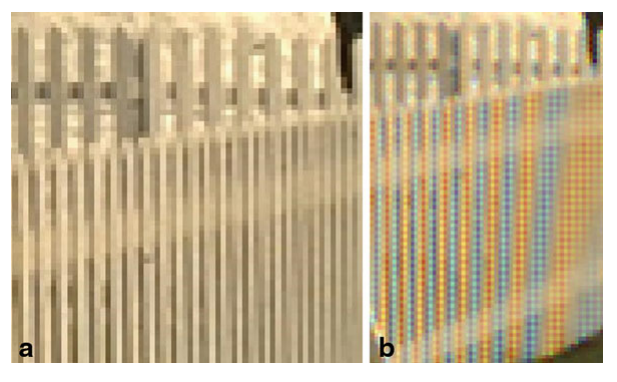}
  \includegraphics[width=0.32\linewidth]{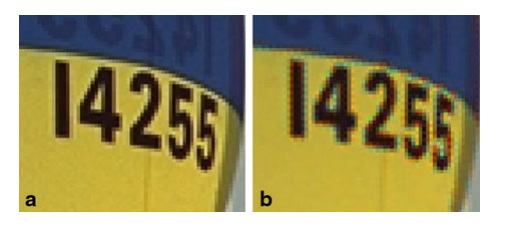}
  \caption{1.) a: Gray image, b: CFA of a, c: Bi-linear interpolation result, d e f: interpolation in color plane, borrowed from \cite{Zhen_Springer_2015}.
  2.)  Zipper Effect and False color - Original image and bi-linear interpolated image.}
  \label{fig:artifact}
\end{figure}

 These 2 artifacts are generalized as \emph{zipper effect} and \emph{false color}. The zipper effect is an abrupt or sudden change in intensity over the neighboring pixels, generally seen along the edges (Figure \ref{fig:artifact}.2a). False colors (Figure \ref{fig:artifact}.2b), on the other hand, are pseudo-colors that are not an accurate representation of the original image. This generally occurs due to inconsistencies in color channels of the original mosaic image. Both these effects are categorized as misguidance color artifacts, generated as a result of wrong interpolation direction. Interestingly, these artifacts are observed even when the interpolation direction is appropriate. However, the effects in this case are less evident when compared to misguided color artifacts. Many deep learning demosaicing techniques outperform other classical techniques in reducing the impact of artifacts. 


\subsection{Performance Evaluation}
A common demosaicing approach is to capture the ground truth images using a professional 3-sensor camera, convert them into a mosaic format using Bayer CFA, interpolate using a demosaicing algorithm, and then compare the results with the ground truth image. Widely used performance criteria are mean Squared Error (MSE) or the equivalent Peak Signal to Noise Ratio (PSNR). The Kodak / McMaster \citep{Yu_McMasterdataset_2018} dataset is generally used as a baseline because of the realistic scenes and varied complexities. The MSE in each color plane is:

\begin{equation}
\mathrm{MSE}(k) = \frac{1}{MN} \sum_{i=1}^M \sum_{j=1}^N \{\hat{I}_k(i,j) - I_k(i,j)\}^2,
\end{equation}
where $\hat{I}_k(i,j)$ is the predicted and $I_k(i,j)$ is the original pixel color component.
Another measure is the Color Mean Squared Error (CMSE) 
\citep{Menon_2011} which averages over the color channel as well,
\begin{equation}
\mathrm{CMSE}(I,\hat{I}) = \frac{1}{3MN} \sum_{k \in \{\mathrm{R}, \mathrm{G}, \mathrm{B}\}} \sum_{i=1}^M \sum_{j=1}^N \{\hat{I}_k(i,j) - I_k(i,j)\}^2,
\end{equation}
The peak signal to noise ratio (PSNR) is defined as,

\begin{equation}
\mathrm{PSNR}(k) = 10  \log_{10}\left(\frac{R^2}{\mathrm{MSE}}\right),
\end{equation}
where R is 1.0 (for float datatype), R is 255 (for integer datatype).
\\
The Color Peak Signal to Noise Ratio (CPSNR) is computed using CMSE. Higher the PSNR value, better is the quality of the demosaiced image.

\section{Architecture Search} \label{ArchitectureSearch}
For the exhaustive search of architectures, we made use of various parameters chosen based on state-of-the-art models to obtain the Pareto front between negative $\mathrm{CPSNR}$ as the loss and the number of parameters as the model complexity. To make the comparisons fair, we ran each model for 500 epochs and retrained the state-of-the-art architecture in this new setting. The various hyper-parameters for exhaustive search were mostly discrete and are as follows:

\begin{enumerate}
\item The use of depthwise separable convolutions \citep{Depthwise_Sifre_2014}, similar to MobileNet V2 architecture \citep{Sandler_MobileNetV2_2018}, over the traditional convolutional layers as per the state-of-the-art models.
\item The number of filters used in the models (16, 32, 64, 128 and 256).
\item Number of blocks, each block consisted of sets of i) depthwise separable ii) standard convolution, and iii) activation function, chosen from (3,5,7 blocks).
\item We use a constant kernel size of 3 $\times$ 3 since much of the hardware has shown to be optimized for this convolution \citep{ShuffleNetV2_Ma_2018}. 
\item The length of skip connections (1,2).
\item The use of cosine annealing \citep{Loshchilov_SGDR_2016}  or a fixed learning rate.
\end{enumerate} 
Some of the reasoning behind choosing the search space is as follows:

\begin{enumerate}
\item Primary reason for choosing this range of filter and block values was to obtain networks that have a smaller number of parameters and FLOPs as the current state-of-the-art networks. The filters and block size dominate the parameter density in a network. We defined an upper bound based on this.
\item Motivated from \cite{ShuffleNetV2_Ma_2018}, choosing  equal channel width minimizes memory access cost (MAC). Hence, we retained the same channel width across all the layers.
\item Furthermore, \citep{ShuffleNetV2_Ma_2018} computed the relative costs of using group convolution, so we used the depth-wise convolution in all the layers.
\item Moreover, for deployment in edge devices, various restrictions are placed based on the platform for inference. In this problem, the model size, the number of FLOPs and MAC play a key role in inference time. For instance, the largest model we have has around $7\times 10^5$ parameters and $5.2\times 10^6$ multiplication FLOPs. Deploying such a model would have severe consequences in the overall usability of the edge device
\item We chose a lower bound on the number of filters, around the region, based on the smallest state-of-the-art model, i.e. DMCNN \citep{Syu_Demosaicing_2018}.
\item We used point-wise convolutions to avoid the use of fully-connected layers as the last layer, and therefore reduce the model size.
\end{enumerate}
Our goal in this study was to explore architectures that improve the state of the art models by using fewer number of parameters and maintaining reasonable accuracy for edge implementation.


\section{Methodology and Theory } \label{SearchMethodology}

Neural architecture search is a discrete optimization problem. Here, we used an exhaustive search over a  small space, where the convergence to the optimum in the designed space is trivial and independent of the loss function, because all points are evaluated. However, most of these neural architecture search techniques are motivated by a very large design space. We show that as the designed space gets infinitely large, even simple methods such as exhaustive search and grid search, require some minimal properties of the loss function to converge to the optimum. Exhaustive search and grid search techniques are common optimization methods that rely only on function evaluation. Therefore, it is counter-intuitive that finding the optimum point using these techniques may require some mathematical assumptions about the search space, or the objective function. Motivated from the derivative-free optimization theory, we aim at clarifying some minimal conditions that the exhaustive search and grid search converges to the optimum.

Let us set the notation first. We denote a vector using bold $\ttheta$ and an element of the this vector using unbold $\theta$. The search space can be a mix of continuous or discrete parameters and denoted by  $\Theta$.
Suppose a neural network is uniquely defined by its hyperparameters embedded in $\ttheta \in \Theta$. Note that the complexity of each network might differ. Complexity may be defined as the number of parameters of the network, inference delay, training time, etc. Often the objective is to discover the neural network with minimum loss (or maximum accuracy) on a fixed validation set. Let's denote the validation loss of the neural network with $\LL(\ttheta)$ and its complexity  $\C(\ttheta)$. It is preferable to minimize $\LL(\ttheta)$ and $\C(\ttheta)$ at the same time, i.e. between two neural networks with the same validation loss  $\LL(.)$, the neural network with smaller complexity $\C(.)$ is preferred. To simplify the theoretical analysis we assume that  $\C(.)$ is fixed and focus on minimizing  the validation loss function $\LL(.) $ regardless of $\C(.)$. In the context of demosaicing $\LL(.)$ is the negative CPSNR and $\C(.)$ is the number of model parameters.

Assume the minimum exists within the search space $\Theta$ 
\begin{equation}
\argmin_{\ttheta\in \Theta} \LL(\ttheta) \neq \emptyset  
\label{eq:minexist}
\end{equation}
Let's define the exhaustive search algorithm to minimize $\LL(.)$  as
\begin{enumerate}
    \item Enumerate: $\{\ttheta_0, \ttheta_1, \ldots, \ttheta_K\}.$
    \item Initialize: $k = 0, \ttheta^*=\ttheta_0, \LL^*=\LL(\ttheta_0).$ 
    \item\label{it:eval} Evaluate  $\LL(\ttheta_{k+1})$.
    \item If $\LL(\ttheta_{k+1})< \LL^*$,  then update $\ttheta^*=\ttheta_{k+1}, \LL^* = \LL(\ttheta_{k+1})$.
    \item If $k<K$ then $k=k+1$ go to \ref{it:eval}, otherwise end.
\end{enumerate}
It is trivial that after iterating the algorithm for $K$ iterations, the global minimum of the objective function is obtained $(\ttheta^*, \LL(\ttheta^*))$, for simplicity of notation we may denote   $(\ttheta^*, \LL^*)$  where
\begin{eqnarray*}    
\ttheta^* = \argmin_{\ttheta \in \Theta} \LL(\ttheta),
\LL^* = \min_{\ttheta \in \Theta} \LL(\ttheta).
\end{eqnarray*}
Convergence to the optimum requires no restriction on the loss function $\LL(.)$ except the existence of the minimum \eqref{eq:minexist}.
However, if the algorithm is iterated only $k<K$ iterations there is no guarantee of convergence to the minimum.

Often the cardinality $||\Theta||=K$ is a large value because in a real scenario  a grid search on the hyperparameter vector $\ttheta$ is implemented, where the search becomes NP-hard, $K=\OO\left(e^d\right)$ where $d=\dim(\ttheta)$, and the algorithm is run only  $k<K$ iterations depending on computation budget.

In the following sections we explore under what conditions the exhaustive search convergence  is assured. Suppose the search space is large but is countable,  $K=\infty$. Infinitely many evaluations requires  more conditions on $\LL(.)$. For simplicity of the notation we may assume $\Theta \subset \real^d$ where $d=\dim(\ttheta)$. Suppose the minimum argument and the objective value  at iteration $k$ of exhaustive search is  denoted by  $(\ttheta^*_k,\LL^*_k)$
\begin{theorem}
\label{theo:conv}
If  $\Theta$ is dense and $\LL(.)$ is continuous,  the exhaustive search converges to the minimum value 
$$\lim_{k\to \infty} \LL^*_k = \LL^*.$$
\end{theorem}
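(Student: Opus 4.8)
The plan is to reduce the claim to an elementary statement about monotone sequences and then bring in the two hypotheses—density of $\Theta$ and continuity of $\LL(.)$—only to identify the limit. First I would observe that the running record $\LL^*_k=\min_{0\le i\le k}\LL(\ttheta_i)$ is, by the update rule in step~4 of the algorithm, non-increasing in $k$, and that it is bounded below by $\LL^*$ because every evaluated value satisfies $\LL(\ttheta_i)\ge\LL^*$. The monotone convergence theorem then gives that $\LL^*_k$ converges to some limit $L$ with $L\ge\LL^*$. The entire problem thus collapses to the reverse inequality, and it suffices to prove $L\le\LL^*+\epsilon$ for every $\epsilon>0$.

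To get this, I would fix $\epsilon>0$ and take $\ttheta^\star$ to be a point at which the minimizing value $\LL^*$ is attained (or approached) in the ambient space, which exists by \eqref{eq:minexist}. Continuity of $\LL(.)$ at $\ttheta^\star$ supplies a radius $\delta>0$ with $\|\ttheta-\ttheta^\star\|<\delta \Rightarrow \LL(\ttheta)<\LL^*+\epsilon$. Density of $\Theta$ then guarantees that the open $\delta$-ball about $\ttheta^\star$ contains an enumerated point $\ttheta_j$, which the enumeration visits at a \emph{finite} index $j$. Hence $\LL(\ttheta_j)<\LL^*+\epsilon$, and by the monotonicity already established, $\LL^*_k\le\LL(\ttheta_j)<\LL^*+\epsilon$ for all $k\ge j$. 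Passing to the limit yields $L\le\LL^*+\epsilon$, and letting $\epsilon\to 0$ forces $L=\LL^*$, as claimed.

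The step I expect to carry the real weight is the second one, and its difficulty is logical rather than computational: since the enumeration $\{\ttheta_0,\ttheta_1,\ldots\}$ may be ordered arbitrarily, I have no control over the index $j$ at which a good approximant of $\ttheta^\star$ first appears, so the argument is purely qualitative and yields no convergence rate. What makes it go through is the interplay of the two hypotheses with monotonicity—density ensures suitable approximating points actually lie in $\Theta$ and are therefore evaluated in finite time, continuity ensures their loss values genuinely approach $\LL^*$, and the non-increasing record makes the resulting bound persist once attained. I would also flag an interpretive subtlety worth stating explicitly: if the minimizer is literally one of the enumerated points, then enumeration alone locates it in finite time and neither hypothesis is needed; density and continuity become genuinely indispensable precisely in the realistic regime where $\LL^*$ is an infimum over a continuum that the countable set $\Theta$ only approximates, and dropping either one (continuity breaking the approximation of $\LL^*$ by nearby values, density leaving a neighborhood of $\ttheta^\star$ unvisited) defeats the conclusion.
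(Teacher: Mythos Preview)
Your argument is correct and follows essentially the same route as the paper: use continuity of $\LL$ at the minimizer to find a $\delta$-neighborhood with values within $\epsilon$ of $\LL^*$, invoke density to place an enumerated point $\ttheta_j$ in that neighborhood, and then sandwich $\LL^*_k$ between $\LL^*$ and $\LL^*+\epsilon$. Your version is more carefully organized---you explicitly invoke monotone convergence for the sequence $\LL^*_k$ and isolate the finite index $j$ at which the bound first takes hold---but the underlying mechanism is identical to the paper's.
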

\begin{proof}
Take the optimal solution $\ttheta^* \in \Theta$ and because 
$\LL(.)$ is continuous  and $\Theta$ is dense
$$\exists \delta>0, \ttheta_k\in\Theta_k  \mathrm{~where~} ||\ttheta_k - \ttheta^* || < \delta , |\LL(\ttheta_k)-\LL(\ttheta^*)|<\epsilon, $$
or equivalently $\LL(\theta_k)<\LL(\ttheta^*)+\epsilon$.
On the other hand   $ \forall k, \LL(\ttheta^*) \leq \LL(\ttheta^*_k) \leq \LL(\ttheta_k)$  and by definition $\LL^* = \LL(\ttheta^*), \LL^*_k = \LL(\ttheta^*_k),$ so
$$\LL^* \leq \LL^*_k \leq \LL(\theta_k)\leq \LL^*+\epsilon,$$
$$\implies \lim_{k\to \infty} \LL^*_k = \LL^*.$$
\end{proof}
\begin{theorem}
If $\Theta$ is compact and $\ttheta^*=\argmin_{\ttheta\in \Theta} \LL(\ttheta)$ is unique, the exhaustive algorithm converges to the unique minimum. 
\end{theorem}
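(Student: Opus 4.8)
The plan is to upgrade the \emph{value} convergence of Theorem~\ref{theo:conv} to convergence of the \emph{argument}, and the two new hypotheses—compactness of $\Theta$ and uniqueness of the minimizer—are exactly what make this possible. First I would keep as standing assumptions the ingredients needed to invoke Theorem~\ref{theo:conv}, namely that $\LL(.)$ is continuous and that the enumeration is dense in $\Theta$; these already give $\LL^*_k = \LL(\ttheta^*_k) \to \LL^*$. The remaining and real task is to prove $\ttheta^*_k \to \ttheta^*$.

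The key steps, in order, are as follows. Since $\Theta \subset \real^d$ is compact it is sequentially compact, so the sequence of incumbents $(\ttheta^*_k)$ admits a convergent subsequence $\ttheta^*_{k_j} \to \bar\ttheta$ with $\bar\ttheta \in \Theta$. By continuity of $\LL(.)$ we have $\LL(\ttheta^*_{k_j}) \to \LL(\bar\ttheta)$; but the same subsequence of values converges to $\LL^*$ by Theorem~\ref{theo:conv}, so $\LL(\bar\ttheta) = \LL^*$ and $\bar\ttheta$ is a minimizer. Uniqueness then forces $\bar\ttheta = \ttheta^*$, so every convergent subsequence of $(\ttheta^*_k)$ has the single limit $\ttheta^*$. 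The final step is the standard subsequence principle: in a sequentially compact space a sequence all of whose subsequential limits coincide must itself converge to that common limit. I would argue it by contradiction—if $\ttheta^*_k \not\to \ttheta^*$, some subsequence stays a fixed distance $\epsilon>0$ from $\ttheta^*$, and by compactness it has a further convergent subsequence whose limit is also $\epsilon$-far from $\ttheta^*$, contradicting the previous step. This yields $\ttheta^*_k \to \ttheta^*$.

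The hard part is conceptual rather than computational: value convergence alone does not imply argument convergence, because the incumbents could in principle wander through $\Theta$ while their objective values still decrease to $\LL^*$. Compactness is what keeps the incumbents from escaping and supplies subsequential limits, while uniqueness pins every such limit to the same point; the subsequence principle then glues these facts into genuine convergence. I would also be careful to state explicitly that continuity of $\LL(.)$ and density of the enumeration are inherited here, since the theorem as phrased lists only compactness and uniqueness yet silently relies on Theorem~\ref{theo:conv}. Without uniqueness the argument breaks down exactly at the pinning step, as the incumbents could then accumulate at several distinct minimizers.
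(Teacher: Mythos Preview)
Your proposal is correct and follows essentially the same route as the paper: extract a convergent subsequence of the incumbents by compactness, pass to the limit in $\LL$ by continuity, identify the limiting value with $\LL^*$ via Theorem~\ref{theo:conv}, and then use uniqueness to pin the limit point to $\ttheta^*$. Your write-up is in fact more complete than the paper's, which stops after identifying a single subsequential limit with $\ttheta^*$ and does not spell out the subsequence principle you add in the final step; you also rightly flag that continuity of $\LL$ and density of the enumeration are tacit hypotheses carried over from Theorem~\ref{theo:conv}.
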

\begin{proof}
$\forall k, \ttheta_k\in \Theta$ which is compact, so there is a sub-sequence that converges to $\ttheta^*_k$, call it $\ttheta^*_{ik}$
$$\lim_{k\to \infty}\lim_{i\to\infty} \ttheta^*_{ik}= \tilde \ttheta. $$
By continuity of $\LL(.)$
$$\lim_{k\to\infty} \lim_{i\to\infty} \LL(\ttheta^*_{ik})=  \LL(\tilde{\ttheta}),$$
but from Theorem~\ref{theo:conv} we know $\lim_{k\to\infty} \LL(\ttheta^*_k)$, therefore $\tilde{\ttheta}\in\Theta.$ From the uniqueness of the minimum $\tilde\ttheta=\ttheta^*.$
\end{proof}

The convergence of exhaustive search relies on the continuity of $\LL(.).$ It is easy to construct cases where the exhaustive search fails to converge if $\LL(.)$ is non-continuous \citep{AudetHare_BlackboxBook_2017}. Although exhaustive search is the first tool to explore the discrete space, in a continuous space exhaustive search is in-feasible. Therefore optimizing over a continuous parameter such as learning rate, annealing rate, regularization constant, etc requires other tools. Bayesian optimization \citep{Snoek_BayesianOpt_2012} is often used and some open source packages are available \citep{Tsirigotis_orion_2018}. However, grid search is still one of the most used tool to tune a continuous parameter.

Grid search on multiple variables is cumbersome as a large dimension of the search variable $d$ implies the evaluation set being $\OO(n^d)$. Multivariate grid search algorithm over the set $\prod_{i=1}^d [a_i, b_i] \subset \real^d$ is as follows: 
\begin{enumerate}
    \item Initialize number of grid evaluation $n \in \natural$.
    \item Set the step size $\delta_i = {b_i - a_i \over n-1}, i=1,\ldots, d.$
    \item Create the grid search space with step iteration $j$
    $$\Theta_n = \{\ttheta \mid \ttheta = l_i +  j  \delta_i, j=0,\ldots, n-1, i=1,\ldots, d\}$$
    \item Evaluate the objective function $\LL(.)$ for all $n^d$ grid points $\theta \in \Theta_n$.
    \item Find the grid search solution $(\ttheta^*_n, \LL^*_n)$ where
        $$\ttheta^*_n = \argmin_{\theta \in \Theta_n} \LL(\ttheta), ~~\LL^*_n = \min_{\theta \in \Theta_n} \LL(\ttheta) .$$
\end{enumerate}

Grid search on $d$ variables is NP-hard, so it is commonly  implemented only on one element of the vector $\ttheta$ while keeping the other elements fixed to certain values. Therefore, it makes more sense to explore the convergence of the grid search on a  uni-variate variable $\theta$.

Suppose the continuous uni-variate parameter $\theta\in[a,b]$ achieves its minimum at $(\theta^*,\LL^*).$ The uni-variate parameter $\theta$ is a continuous element, such as  the learning rate, and is also an element of the vector $\ttheta$, in which  the vector $\ttheta$ fully determines the neural network. 

The grid search convergence result relies on creating a nearly dense set $\Theta_n$ by choosing a large number of grid evaluation $n$. The convergence also depends on $\LL(\theta)$ being Lipschitz continuous, i.e.
$$\forall \theta, \theta'\in [a,b],   |\LL(\theta)-\LL(\theta')|\leq M |\theta-\theta'|, $$
where $M$ is the Lipschitz constant.
\begin{theorem}
\label{theo:grid}
Suppose $\LL(.)$ is Lipschitz continuous on $\theta \in [a,b]$ and a grid search is run over $n$ grids to create $\Theta_n$ with step size $\delta$, then
$$ \LL^*_n  - {M\delta\over 2}  \leq \LL^*$$
\end{theorem}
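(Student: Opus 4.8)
The plan is to convert closeness in the domain into closeness in the objective value, using the Lipschitz bound as the bridge. The one essential observation is that a grid of step size $\delta$ always places some grid node within $\delta/2$ of the true minimizer $\theta^*$; Lipschitz continuity then forces the value of $\LL$ at that node to exceed $\LL^*$ by no more than $M\delta/2$, and since $\LL^*_n$ is a minimum over all nodes it can only be smaller.

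First I would locate the minimizer: by assumption the minimum over $[a,b]$ is attained at some $\theta^*$ with $\LL(\theta^*)=\LL^*$. The grid partitions $[a,b]$ into $n-1$ consecutive closed cells of equal length $\delta=(b-a)/(n-1)$, so $\theta^*$ lies in at least one such cell $[\theta_j,\theta_{j+1}]$. Both endpoints of that cell are grid points, and the nearer of the two — call it $\theta_{j_0}\in\Theta_n$ — satisfies $|\theta_{j_0}-\theta^*|\le \delta/2$, since within an interval of length $\delta$ the largest possible distance to the nearer endpoint is attained at the midpoint and equals $\delta/2$.

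Next I would apply Lipschitz continuity to this pair. From $|\LL(\theta_{j_0})-\LL(\theta^*)|\le M|\theta_{j_0}-\theta^*|$ and the bound on the argument, I get $\LL(\theta_{j_0})\le \LL(\theta^*)+M\delta/2=\LL^*+M\delta/2$. Finally, because $\theta_{j_0}$ is one of the evaluated grid points while $\LL^*_n$ is the minimum over all of $\Theta_n$, I have $\LL^*_n\le \LL(\theta_{j_0})$, and chaining the two inequalities gives $\LL^*_n\le \LL^*+M\delta/2$, which is precisely the claim after rearranging. (For context, the reverse inequality $\LL^*\le \LL^*_n$ is immediate from $\Theta_n\subset[a,b]$, so the theorem in fact sandwiches the grid error in $[0,\,M\delta/2]$.)

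I do not anticipate a genuine obstacle: once the nearest-node fact is established the result is a single application of the Lipschitz property. The only point requiring a little care is that covering argument — verifying that every point of $[a,b]$ is within $\delta/2$ of some grid node — together with the degenerate case where $\theta^*$ is itself a grid point, in which the distance is $0$ and the bound holds trivially. Note that letting $n\to\infty$ sends $\delta\to 0$ and recovers $\LL^*_n\to\LL^*$, the continuous-parameter analogue of Theorem~\ref{theo:conv}.
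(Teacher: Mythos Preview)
Your proposal is correct and follows essentially the same argument as the paper: locate the grid node nearest to $\theta^*$, observe it lies within $\delta/2$, apply the Lipschitz bound, and then use the minimality of $\LL^*_n$ over $\Theta_n$. If anything, you are more explicit than the paper in justifying both the $\delta/2$ covering fact and the final chaining step $\LL^*_n\le\LL(\theta_{j_0})$.
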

\begin{proof}
Suppose $\theta^*\in\Theta$, take $\ttheta_n \in \Theta_n$ as a grid search candidate. For any point in the grid $\Theta_n$, define the closest point to $\theta^*$ in the grid space
$\theta_n^* = \argmin_{\theta_n\in\Theta_n} |\theta_n-\theta^*|.$
The distance between two consecutive points is $\delta$, so  $|\theta^*_n-\theta^*|\leq {\delta \over 2}.$  By the Lipschitz continuity of $\LL(.)$
$$|\LL(\theta^*_n)-\LL(\theta^*)|\leq M |\theta^*_n-\theta^*| \leq {M\delta \over 2}$$
$$ \implies \LL^*_n  - {M\delta\over 2}  \leq \LL^*.$$
\end{proof}

Theorem~\ref{theo:grid} means for a finite $n$, $\LL^*_n$ may never reach the true minimum $\LL^*$ as $\LL^* \leq \LL^*_n$, but improves swiftly as $\delta$ gets smaller by evaluating a smaller step size or equivalently by increasing the number of evaluations $n$; the lower-bound improves by $\OO(n^{-1})$.

\section{Application} \label{Application}
We first describe the dataset that was used for our experiments and elaborate on the results obtained.
\begin{figure}[t]
  \centering
  \includegraphics[width=0.4\linewidth]{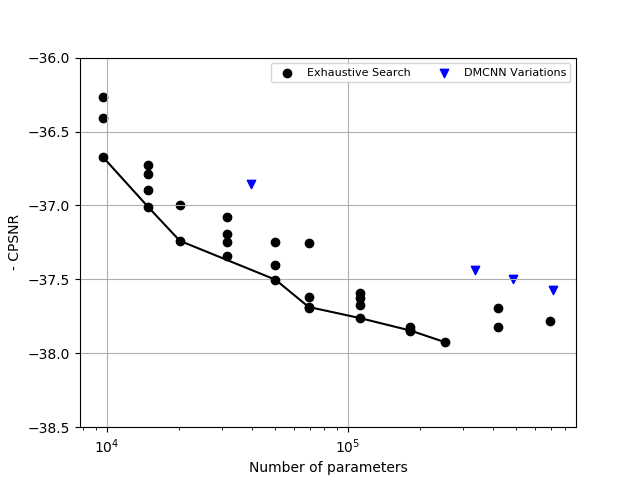}
  \includegraphics[width=0.4\linewidth]{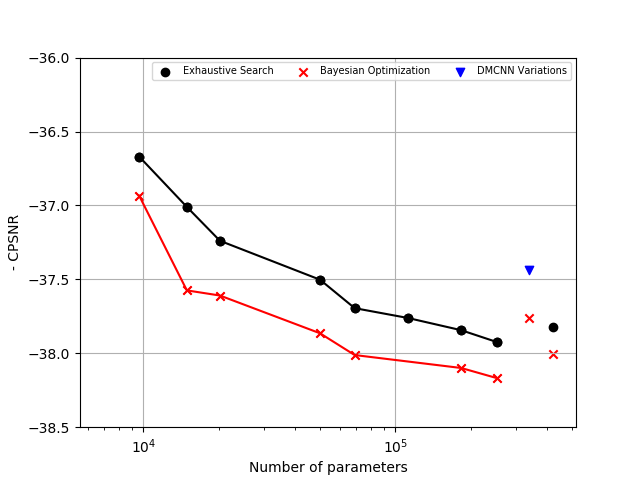}\\
  \includegraphics[width=0.4\linewidth]{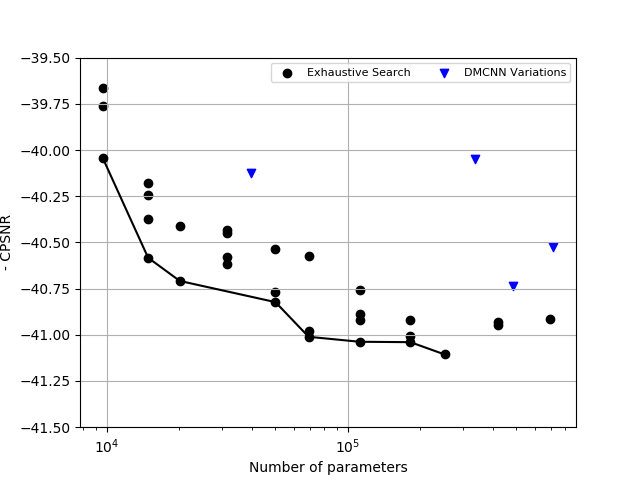}
  \includegraphics[width=0.4\linewidth]{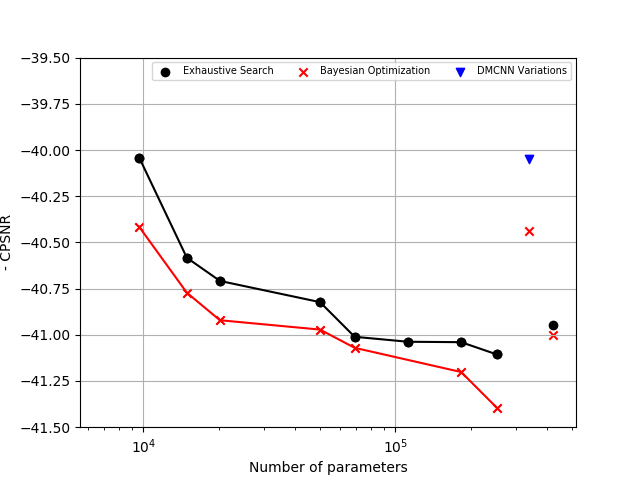}
  \caption{Pareto front of MCM dataset 18 photos with standard error  0.53 (top panels) compared to and Kodak dataset 24 photos standard error  0.66 (bottom panels). Left panels is exhaustive search versus  right  Bayesian Optimization of learning rate and regularization constant implemented on Pareto architectures.}
  \label{fig:PF_McM}
\end{figure}

Flickr500 data is used by \citep{Syu_Demosaicing_2018} to generate around 3.5 million patches. The authors choose the images with some criteria such as i) being colorful enough to obtain the proper color distributions of real world, ii) high-frequency patterns in the images and iii) of high-quality.  However, there are some key points missing in the description to re-generate the exact results such as:
\begin{enumerate}
\item Specifications on exactly how the patches are generated from the original Flickr500 images (related to the stride etc while cropping the patches).
\item Specifications on how to apply the Bayer CFA for creating the mosaic image. 
\end{enumerate}
Therefore, some assumptions were made while generating our data.
\begin{enumerate}
\item We generated random patches of $32\times 32\times 3$ to obtain approximately 1.5 million image patches from the 500 images. Since the method is not specified, we used random points in the image and extracted patches from the random points. 1382400 patches were used for the training set and 153600 patches for the validation set. Note: In the paper, they use patches of size $33 \times  33 \times 3$ but we used $32 \times 32 \times 3$ because the training is much faster on GPU due the dimensions being a power of 2 for tiling.
\item The Bayer CFA generation is also based on the python package. We used the default setting of $RGGB$ and filled the missing channels with zeros to maintain the channel dimensions.
\item All the results in the section were calculated against the McMaster dataset (MCM) containing 18 photos (58000 patches created), and the Kodak images containing 24 photos (78408 patches created).
\end{enumerate}

The Pareto front of our exhaustive search is shown in Figure \ref{fig:PF_McM}. The $y$-axis represents the validation loss $\LL(.)$ which indicates the negative CPSNR and the $x$-axis represents the model complexity $\C(.)$, which is the total number of parameters in the model. The standard error for each experiment is calculated and as they are comparable, they produce a similar prediction interval.

Most of the architectures in the search space outperform the current state-of-the-art (SOTA) as in Figure \ref{fig:PF_McM} in a similar setting, see also Table~\ref{tab:mcm}. The total number of parameters in the current SOTA is approximately $7\times 10^5$. Architectures on the Pareto front maintain the same accuracy while significantly reducing the number of parameters. In our search the effect of cosine annealing was not obvious and would need further tuning on cycle rate. The ResNet skip connections length of 2 blocks worked better than 1.

Some of the models on the Pareto front are shown in the table below in comparison to the current state-of-the-art models for the MCM dataset. Once the initial search was completed, the models from the Pareto front architectures were fed through a Bayesian optimizer using the ORION package \citep{Tsirigotis_orion_2018}, to optimize the learning rate and regularization constant for $L_2$ regularization. Figure~\ref{fig:PF_McM} indicates further improvement over the initial CPSNR obtained with a fixed learning rate $10^{-4}$ and regularization constant $10^{-8}$, so Bayesian optimization is recommended as an extra step after the architecture search as including it in the first step would further increase the complexity of the search. In our experiments, the PSNR was calculated separately for each image in the test dataset; the final CPSNR value was the mean of all the PSNRs of all images in the dataset.

\begin{table}[t]
\begin{footnotesize}
\begin{center}
\begin{tabular}{|  c | c | c |  c | c | c | c | }
\hline
Model& No of blocks & Skip length  & $\LL(.)$ & Standard Error & $\C(.)$ \\ \hline
\hline
\textbf{64 filter} & {\bf 20} & {\bf 20}  & {\bf $-37.58$} & {\bf 0.66} & {\bf 710275}\\
\hline
64 filter& 15 & 15  & $-37.51$ & 0.62 & 487171\\
\hline
64 filter& 10 & 10  & $-37.45$ & 0.64 & 338435\\
\hline
256 filter & 3 & 2 & $-37.77$ & 0.68& 420879\\
\hline
128 filter & 7 & 2  & $-37.94$ & 0.67& 252431\\
\hline
128 filter & 5 & 2  & $-37.82$ & 0.69& 182287\\
\hline
\end{tabular}
\end{center}
\end{footnotesize}
\caption{The negative CPSNR denoted by $\LL(.)$ versus the number of parameters denoted by $\C(.)$ for few  architectures of the search space on the MCM dataset. The first line in bold shows the state-of-the-art.}
\label{tab:mcm}
\end{table}

\section{Conclusion}
Our designed space with a simple exhaustive search outperforms the state of the art and brings a range of loss versus complexity for edge implementation with varying resource constraints. Although in most of architecture search, the number of evaluations and complex search algorithm implementation is the bottleneck, here we showed using prior vision domain knowledge, we can overcome these drawbacks and a simple exhaustive search becomes an effective search tool. We only focused on demosaicing as an example of a low-vision problem where edge implementation is an obligation. We wonder whether a similar approach is useful in edge implementation of other low vision problems. A simple theoretical study suggests the continuity of the loss function and compactness of the search space for exhaustive and grid search to converge to the optimum. 


\section*{Acknowledgement}
We want to acknowledge Jingwei Chen, Yanhui Geng, and Heng Liao;  executing this research project was impossible without their continuous support. We appreciate the assistance of Huawei media research lab of Japan that provided us details about demosaicing and camera sensors. We  declare our deep gratitude to  Charles Audet and S\'ebastien Le Digabel who introduced to us the theory behind derivative-free optimization.

\bibliography{samplepaper}

\end{document}